\let\captiontemp\@makecaption\makeatother
\let\@makecaption\captiontemp\makeatother
\newcommand{\tr}{\mbox{$^{\top}$}}
\def\R{{\rm I} \! {\rm R}}
\newcommand{\eqn}[1]{(\ref{eqn:#1})}
\newcommand{\fig}[1]{Figure~\ref{fig:#1}}
\newcommand{\tab}[1]{Table~\ref{tab:#1}}
\newcommand{\Tab}[1]{Table~\ref{tab:##2}}
\newcommand{\Sect}[1]{Section~\ref{sec:#1}}
\newcommand{\SKIP}[1]{} 
\newcommand{\mbegin} {\left [ \begin{array}}
\newcommand{\mend}   {\end{array} \right ]}
\newcommand{\detbegin} {\left | \begin{array}}
\newcommand{\detend}   {\end{array} \right |}
\newcommand{\vbegin} {\left ( \begin{array}{c}}
\newcommand{\vend} {\end{array}\right )}
\def\squareforqed{\hbox{\rlap{$\sqcap$}$\sqcup$}}
\def\qed{\ifmmode\squareforqed\else{\unskip\nobreak\hfil
	\penalty50\hskip1em\null\nobreak\hfil\squareforqed
	\parfillskip=0pt\finalhyphendemerits=0\endgraf}\fi}
\def\vec#1{\mathchoice%
	{\mbox{\bf $\displaystyle\boldsymbol{\bf{#1}}$}}
	{\mbox{\bf $\textstyle\boldsymbol{\bf{#1}}$}}
	{\mbox{\bf $\scriptstyle\boldsymbol{\bf{#1}}$}}
	{\mbox{\bf $\scriptscriptstyle\boldsymbol{\bf{#1}}$}}}
\def\v#1{\protect\vec #1}
\newcommand{\showeqnlabel}{
	\hbox to 0pt{\quad\quad\relax\fbox{\scriptsize\rm\eqnlblx}%
	\gdef\eqnlblx{xxxx}}} \newcommand{\eqnlblx}{}
\def\@eqnnum{\rm (\theequation)\showeqnlabel}
\newcommand{\nofig}[1]{\centerline{\bf Figure here}}
\def\mat#1{\mathchoice{\mbox{\bf$\displaystyle\tt#1$}}
	{\mbox{\bf$\textstyle\tt#1$}}
	{\mbox{\bf$\scriptstyle\tt#1$}}
	{\mbox{\bf$\scriptscriptstyle\tt#1$}}}
\def\m#1{\protect\mat #1}
\def\thmcolon{\relax}
\newtheorem{THEOREM}{Theorem}
\newtheorem{LEMMA}[THEOREM]{Lemma}
\newtheorem{PROPOSITION}[THEOREM]{Proposition}
\newtheorem{COROLLARY}[THEOREM]{Corollary}
\newtheorem{DEFINITION}[THEOREM]{Definition}
\newtheorem{OBSERVATION}[THEOREM]{Observation}
\newenvironment{lemma}{\begin{LEMMA} \thmcolon \rm}{\end{LEMMA}}
\newcommand{\Dh}[1]{{\tilde{\v #1}}}
\ifcvprfinal\pagestyle{empty}\fi
\begin{document}

\title{Joint Unsupervised Learning of Optical Flow and Egomotion with Bi-Level Optimization}

\author{%
Shihao Jiang\textsuperscript{1,2,3}, Dylan Campbell\textsuperscript{1,2}, Miaomiao Liu\textsuperscript{1,2}, Stephen Gould\textsuperscript{1,2}, Richard Hartley\textsuperscript{1,2}\\
\textsuperscript{1}Australian National University, %
\textsuperscript{2}Australian Centre for Robotic Vision, %
\textsuperscript{3}Data61/CSIRO %
}

\maketitle

\begin{abstract}
We address the problem of joint optical flow and camera motion estimation in rigid scenes by incorporating geometric constraints into an unsupervised deep learning framework. 
Unlike existing approaches which rely on brightness constancy and local smoothness for optical flow estimation, we exploit the global relationship between optical flow and camera motion using epipolar geometry. 
In particular, we formulate the prediction of optical flow and camera motion as a bi-level optimization problem, consisting of an upper-level problem to estimate the flow that conforms to the predicted camera motion, and a lower-level problem to estimate the camera motion given the predicted optical flow. 
We use implicit differentiation to enable back-propagation through the lower-level geometric optimization layer independent of its implementation, allowing end-to-end training of the network. 
With globally-enforced geometric constraints, we are able to improve the quality of the estimated optical flow in challenging scenarios, and obtain better camera motion estimates compared to other unsupervised learning methods. 
\end{abstract}

\section{Introduction}
\label{sec:introduction}
Dense optical flow estimation is a fundamental problem in computer vision for determining the apparent motion of pixels in an image as the camera and scene moves. It has broad applications in action recognition~\cite{simonyan2014two}, 3D reconstruction \cite{kumar2017monocular}, and motion segmentation~\cite{narayana2013coherent}.
The seminal work by Horn and Schunck~\cite{horn1981determining} sets the foundation for solving optical flow estimation problems by enforcing brightness constancy and local smoothness constraints in a variational setting. In the last few decades, the quality of optical flow estimation has improved dramatically with the introduction of ideas such as piece-wise smoothness~\cite{black1996estimating}, coarse-to-fine refinement for large displacements~\cite{brox2004high}, and layered formulations for handling occlusions~\cite{xiao2006bilateral}.

Like many problems in computer vision, approaches based on the supervised learning of convolutional neural networks (CNNs) now achieve the state-of-the-art results for optical flow estimation~\cite{dosovitskiy2015flownet, ilg2017flownet, sun2018pwc}. However, the difficulty of obtaining large volumes of ground-truth optical flow limits the applicability of these approaches in many scenarios. Unsupervised learning approaches are a promising alternative, which encode brightness constancy and local smoothness constraints in a loss function for training deep networks~\cite{jason2016back}. While such constraints perform well in feature-rich regions, they often fail on challenging scenes with featureless or repetitively-textured regions.

To achieve more robust prediction and handle these cases more effectively, we exploit the geometric constraint between the optical flow and camera motion in an unsupervised learning framework. Specifically, we focus on optical flow estimation in mostly rigid scenes, where optical flow is predominantly caused by camera motion~\cite{sun2014quantitative, wulff2017optical}. Thus the displacement of corresponding pixels across images, i.e.,~the optical flow, satisfies the well-known epipolar constraint~\cite{hartley2004multiple}. We formulate this as an~\emph{epipolar geometric loss} defined on an essential matrix determined from the optical flow. Compared to the fundamental matrix used in existing work~\cite{zhong2019unsupervised}, the essential matrix provides tighter constraints and can be obtained from state-of-the-art geometric algorithms~\cite{hartley2012efficient} that provide more accurate camera motion estimates than, for example, the 8-point algorithm~\cite{hartley2004multiple}.

To compute the epipolar geometric loss we first require an estimate of the camera motion.
As such, we formulate flow estimation as a bi-level optimization problem. The upper-level problem is to estimate the optical flow by minimizing the epipolar geometric loss as well as enforcing the standard brightness constancy constraint. The geometric loss is defined based on an~\emph{essential matrix} encoding of camera motion, which is obtained by solving a lower-level optimization problem that estimates the camera motion from the optical flow. To enable end-to-end training, we use implicit differentiation to back-propagate the gradient of the upper-level loss through the essential matrix estimation layer (the lower-level problem). An overview of our training pipeline is shown in \figref{fig:flowchart}.

Overall, our key technical contributions include: (1) the introduction of a geometric constraint into an unsupervised deep learning framework for optical flow and camera motion estimation, and (2) the formulation of an end-to-end trainable model with an embedded optimization layer that estimates the camera motion required for computing the epipolar loss. Importantly, our formulation allows back-propagation through the optimization layer regardless of the algorithmic implementation used for computing the essential matrix. We show that our geometrically-constrained model can accurately estimate optical flow satisfying the epipolar geometry. Our optical flow estimation method outperforms approaches that ignore geometry and produces remarkably good results on cases that previous approaches find challenging, such as featureless regions and regions with repetitive features. Our camera motion estimation also compares favourably against methods that directly use a network to predict camera poses. 

\begin{figure*}[!t]\centering
\def\svgwidth{\textwidth}
\input{./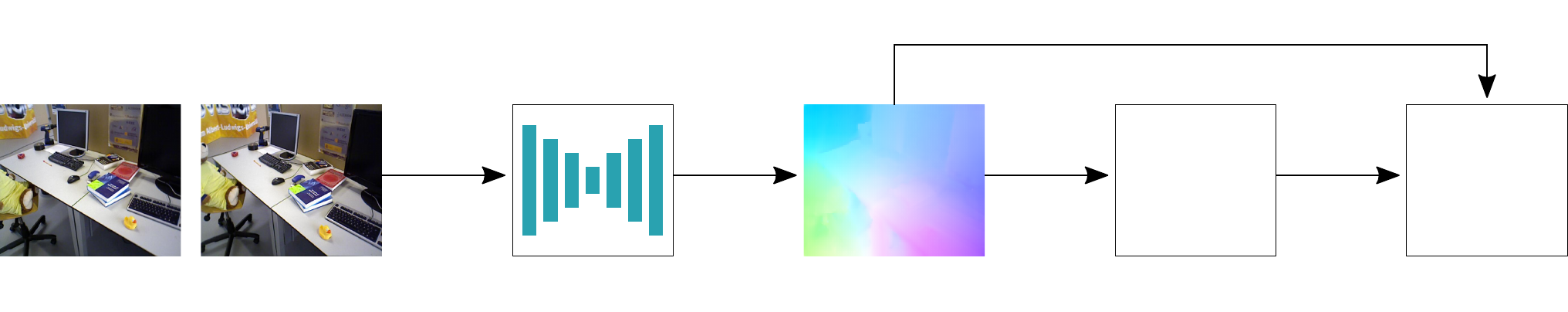_tex}
\caption{Overview of our optical flow and egomotion training pipeline. With implicit differentiation, the gradient can be back-propagated through a complex geometric optimization algorithm. The blue-green arrows show the back-propagation direction of the gradients. The input images are denoted by $I$ and $I'$ and the predicted optical flow from the network is denoted by $\m V$. The estimated flow is then fed into a geometric optimization layer and outputs an essential matrix that best fits the optical flow $\m E(\v \theta^*)$, parametrized by $\v \theta^*$. The essential matrix (egomotion) and the predicted flow are used to compute the upper-level loss for training the network.} 
\label{fig:flowchart}
\end{figure*}

\section{Related Work}
\label{sec:related_work}

\textbf{Learning-based optical flow estimation.}
Optical flow estimation has been studied extensively since the pioneering work of Horn and Schunck~\cite{horn1981determining}. The reader is directed to Fortun \etal~\cite{fortun2015optical} and Sun \etal~\cite{sun2018pwc} for comprehensive reviews of the literature. Recent approaches formulate optical flow estimation as a supervised deep learning task. Compared to traditional methods, convolutional neural networks (CNNs) have the advantage of fast inference once trained, making real-time prediction possible. For example, PWC-Net~\cite{sun2018pwc}, building on previous supervised optical flow networks~\cite{dosovitskiy2015flownet,ilg2017flownet}, incorporated traditional optical flow techniques into the network such as cost volumes and feature warping, and achieved better performance than traditional methods with a shorter running time.

Despite strong results, supervised deep learning approaches are limited by the need for ground-truth optical flow during training, which is difficult to obtain for real-world scenes. Unsupervised learning instead allows the network to be trained on large volumes of unlabelled data. Several unsupervised loss functions have been proposed, including photometric constancy and local smoothness losses~\cite{jason2016back}, and occlusion-aware bidirectional consistency and robust census losses~\cite{meister2018unflow}. 
Other works explicitly reason about occlusion~\cite{wang2018occlusion}, or synthetically augment data for better occlusion estimation~\cite{liu2019ddflow, liu2019selflow}.
All of these approaches rely on local matching so cannot handle smooth image regions.
Our work is built on some of these previous works, but also enforces global geometric consistency, allowing it to better handle smooth, featureless regions.

\textbf{Optical Flow and Epipolar Geometry.}
There has been an extensive study on the relationship between optical flow and epipolar geometry and their applications. 
Weber and Malik \cite{weber1997rigid} first applied epipolar geometry to estimate and track independently moving objects from optical flow. 
Early works have also been reviewed in the book by Xu and Zhang~\cite{xu2013epipolar}, which propose to look at the correspondence problem from the standpoint of epipolar geometry. 
Difficult 2-D search problem can be simplified to a 1-D search problem under the assumption that the epipolar geometry is known \textit{a priori}, which is also demonstrated later by Yamaguchi~\etal~\cite{yamaguchi2013robust} on the problem of rigid scene optical flow estimation.
In contrast, rather than treating the estimated epipolar geometry as known \textit{a priori} and imposing a hard constraint, we propose a soft constraint between optical flow and epipolar geometry, and a joint optimization approach between the two in a deep learning context. 
Our idea is similar to~\cite{valgaerts2008variational} in that we couple the estimation of epipolar geometry and optical flow to solve a joint optimization problem. However, we propose a method that can be end-to-end trainable in an unsupervised learning framework. 

Recently, there have been works that leverage epipolar geometry in unsupervised learning framework~\cite{bai2016exploiting, zhong2019unsupervised} with the aim of handling multiple motions. We instead focus on static scenes and demonstrate how to back-propagate the gradients of the loss function through the geometric estimation layer and train the network in an end-to-end manner.  

\textbf{Unsupervised Learning of Camera Motion.} 
Another line of works that has gained popularity in recent years is unsupervised learning of depth and motion from videos since the first work of Zhou \etal \cite{zhou2017unsupervised}. Subsequent works adopted the idea of joint learning of flow, motion and depth and witnessed marginal improvements at the cost of large network parameters \cite{yin2018geonet, zou2018df}. All previous works use a camera motion network to directly regress motion from two images, whereas we propose a hybrid method: asking a network to infer dense correspondences (optical flow), and then use optimization to solve for the camera motion. This hybrid learning idea has also been investigated by Zhou \etal \cite{zhou2019learn} in the context of visual localization.

\textbf{Differentiable optimization.}
A few recent works embed optimization problems as layers within a deep learning model \cite{amos2017optnet, fernando2016learning, lee2019meta, rajeswaran2019meta}. These layers encode complex dependencies and constraints that cannot be easily learned by convolution or fully-connected layers. It is apparently non-trivial to back-propagate gradients through these layers. However,
Gould \etal~\cite{gould2016differentiating, gould2019deep}
addressed this problem and provided general techniques for differentiating argmin and argmax problems (both constrained and unconstrained). Amos \etal~\cite{amos2017optnet} proposed a differentiable optimization layer but was limited to quadratic programs. These techniques relating to differentiable optimization have been used to solve several computer vision problems, such as video classification \cite{fernando2016learning} and meta-learning \cite{lee2019meta, rajeswaran2019meta}.
In this work, we embed an epipolar geometric constraint into a deep learning framework via bi-level optimization, jointly estimating camera motion and optical flow in an unsupervised fashion.

\section{Bi-Level Optimization for Optical Flow and Essential Matrix Estimation}
\label{sec:bilevel}

We define optical flow as a dense field of displacement vectors, where the displacement vector at each pixel coordinate in one image points to the coordinate of the corresponding pixel in another image.
Let a pixel coordinate in image $I \in \R^{W \times H \times 3}$ be denoted by $\v p = (u, v)$ and the corresponding pixel in the other image $I'$ by $\v p' = (u', v')$.
We assume that $I$ and $I'$ are two views of the same scene, typically consecutive frames from a video sequence.
The optical flow between $I$ and $I'$ is then the matrix $\m V = [\v v_1, \dots, \v v_N] \in \R^{2 \times N}$ of displacement vectors $\v v_i = \v p_i' - \v p_i$ for every pixel $\v p_i$ in image $I$ and corresponding pixel $\v p_i'$ in image $I'$. Here $N = WH$ is the total number of pixels in image $I$.

Given the camera intrinsic calibration matrices $\m K$ and $\m K'$ for the image pair $I$ and $I'$, we can obtain the normalized coordinates as $\v x = \m K^{-1} \Dh{p}$ and $\v x' = \m K'^{-1} \Dh{p}'$, where $\Dh{p} = (u, v, 1)\tr$ is the pixel $\v p$ expressed in homogeneous coordinates.
Corresponding points $\v x \leftrightarrow \v x'$ in normalized coordinates satisfy the geometric relationship $\v x'\tr \m E \v x = 0$, known as the \emph{epipolar constraint}.
For camera matrices $\m P = \m K[\m I | 0]$ and $\m P' = \m K'[\m R | \v t]$, the essential matrix can be decomposed into rotation $\m R$ and translation $\v t$ components as
$\m E = [\v t]_{\times}\m R$
with $\v t$ known up to scale~\cite{hartley2004multiple}.
We address the problem of incorporating this epipolar constraint into an optimization procedure for deep optical flow estimation.

\subsection{Optimizing an Epipolar Loss Function}
\label{sec:bilevel_loss}

We formulate optical flow estimation as a bi-level optimization problem, with an upper-level problem that is solved subject to constraints enforced by a lower-level problem, given by
\begin{align}
\displaystyle \mathop{\text{minimize}}_{\m V}
&\enspace L(\m V, \v \theta^{\star}) \label{eqn:bilevel_objective}\\
\text{subject to}
&\enspace \v \theta^{\star} \in \displaystyle \argmin_{\v \theta \in \R^5} l(\m V, \v \theta) \label{eqn:bilevel_constraint}
\end{align}
where $\v \theta \in \R^5$ is the minimal parametrization of $\m E$ given in Hartley \& Li \cite{hartley2012efficient}.
The upper-level loss function $L$ comprises several terms, which is fully described in Section~\ref{sec:unsupervised}. To encourage the optical flow to satisfy the epipolar geometry, one of the terms in $L$ is the one-sided epipolar error, given by the global geometric loss function
\begin{equation}
    L_{\text{e}} (\m V, \v \theta^{\star})  = \sum_{i = 1}^{N}\frac{(\v x_i'\tr\m E(\v \theta^{\star})\v x_i)^2}{[\m E(\v \theta^{\star})\v x_i]^2_1 + [\m E(\v \theta^{\star})\v x_i]^2_2}
    \label{eqn:geometric_loss}
\end{equation}
where the essential matrix $\m E$ is a function of its parameters $\v \theta^{\star}$ and $[\m E\v x]_j$ denotes the $j$\textsuperscript{th} component of the 3-vector $\m E\v x$. This error measures the sum squared distance of each point $\v x'$ to its corresponding epipolar line $\m E \v x$.
Note that $\v x_i$ is constant and $\v x_i' = \m K'^{-1} (\Dh{p}_i + \Dh{v}_i)$ is a function of the optical flow $\m V$.
The optimal essential matrix parameters $\v \theta^{\star}$ also depend on the optical flow $\m V$. We formulate essential matrix estimation as a lower-level optimization problem~\eqn{bilevel_constraint}, with a robust algebraic error objective function given by
\begin{align}
    l (\m V, \v \theta) &= \sum_{i=1}^{N}\rho\left(\v x_i'\tr \m E(\v \theta)\v x_i\right), \label{eqn:lower_loss}\\
    \rho(z; \delta) &= 
        \begin{cases}
        \frac{1}{2}z^2, & \text{if } |z| < \delta \\
       \frac{1}{2}\delta^2, & \text{otherwise.} \\
        \end{cases}
    \label{eqn:robust_l2_norm}
\end{align}
The robust function $\rho(\cdot)$ is a truncated $L_2$ penalty function with an inlier threshold $\delta$.

To solve the bi-level optimization problem~\eqn{bilevel_objective} within a deep learning context, we need to back-propagate gradients through the essential matrix optimization layer. During the forward pass, for each image pair we solve problem \eqn{bilevel_constraint} using iteratively re-weighted least squares (IRLS) to minimize the robust objective function $l$ \eqn{lower_loss}. However, this function is non-convex with many local minima. Hence, we first obtain a robust initial estimate of the essential matrix parameters using RANSAC \cite{fischler1981random} with the five-point algorithm \cite{li2006five, nister2004efficient}, which we have implemented as an efficient GPU routine.

During the backward pass we need to compute $\text{d} \v \theta^{\star} / \text{d} \m V$, which amounts to differentiating the $\argmin$ function. This can be achieved using implicit differentiation described below. As we will see, the gradient computation is agnostic to the method used to solve the lower-level problem, and only requires that a solution be found. Importantly, this means that we do not need to back-propagate through the specific steps of the optimization algorithm.

\subsection{Implicit Differentiation}
\label{sec:bilevel_implicit_differentiation}

Robustly estimating the essential matrix---via the lower-level optimization problem in \eqn{bilevel_constraint}---does not have an analytic solution and involves a non-differentiable RANSAC procedure to mitigate the effect of outliers. Obtaining the gradient would therefore not be possible using explicit differentiation or direct automatic differentiation.
However, since the objective function of the lower-level optimization problem in \eqn{bilevel_constraint} is twice-differentiable, we can compute the gradient of the $\argmin$ function using implicit differentiation knowing only the optimal solution (and not how it was obtained)~\cite{gould2016differentiating, samuel2009learning, domke2012generic, ochs2015bilevel}.
The key result, a special case of Dini's implicit function theorem~\cite[p19]{dontchev2014implicit} applied to the optimality condition $\ddinline{f(x, \v y)}{\v y} = \v 0$, is given below for completeness.
\begin{lemma}
\label{lm:implicit_diff}
(Gould \etal \cite{gould2016differentiating})
Let $f: \R \times \R^n \rightarrow \R$ be a continuous function with first and second derivatives. Set $\v g(x)$ to be a stationary point of $f(x, \v y)$ with respect to $\v y$,
for example $\v g(x) \in \argmin_{\v y \in \R^n} f(x, \v y)$,
and let the Hessian $f_{YY}(x, \v g(x))$ be nonsingular.
Then the vector derivative of $\v g$ with respect to $x$ is 
\begin{align}
\label{eqn:implicit_diff}
\frac{\mathrm{d}\v g}{\mathrm{d}x} &= -f_{YY}(x, \v g(x))^{-1} f_{XY} (x, \v g(x))
\end{align}
where $f_{YY}(x, \v y) \doteq \deldel{f(x, \v y)}{\v y}  \in \R^{n \times n}$ and $f_{XY}(x, \v y) \doteq \frac{\partial^2 f(x, \v y)}{\partial x \partial \v y}  \in \R^{n}$.
\end{lemma}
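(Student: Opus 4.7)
The plan is to prove the lemma as a direct application of the implicit function theorem to the first-order optimality condition. Since $\v g(x)$ is a stationary point of $f(x, \cdot)$, it satisfies
\begin{equation*}
F(x, \v g(x)) \doteq \frac{\partial f}{\partial \v y}(x, \v g(x)) = \v 0,
\end{equation*}
which is a system of $n$ equations in the $n+1$ variables $(x, \v y)$. This identity in $x$ is the hook that lets us eliminate the unknown derivative $\mathrm{d}\v g/\mathrm{d}x$ without ever writing $\v g$ explicitly.

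First I would verify the hypotheses of the implicit function theorem for $F$ at the point $(x, \v g(x))$: $F$ is continuously differentiable because $f$ has continuous first and second derivatives, and its Jacobian with respect to $\v y$ is exactly $f_{YY}(x, \v g(x))$, which is assumed nonsingular. This guarantees the existence of a locally unique, continuously differentiable branch $\v y = \v g(x)$ satisfying $F(x, \v g(x)) = \v 0$ in a neighborhood of $x$, so that the derivative $\mathrm{d}\v g/\mathrm{d}x$ is well defined.

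Next I would differentiate the identity $F(x, \v g(x)) = \v 0$ with respect to $x$ using the chain rule, obtaining
\begin{equation*}
f_{XY}(x, \v g(x)) + f_{YY}(x, \v g(x))\,\frac{\mathrm{d}\v g}{\mathrm{d}x} = \v 0.
\end{equation*}
Left-multiplying by the inverse Hessian $f_{YY}(x, \v g(x))^{-1}$, which exists by hypothesis, yields the claimed formula~\eqn{implicit_diff}. A small bookkeeping remark is needed to reconcile column/row conventions: $f_{XY}$ is defined in the lemma as $\partial^2 f / (\partial x\, \partial \v y) \in \R^n$, which matches $\partial F/\partial x$, so the product on the right-hand side is a vector in $\R^n$ as required.

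The routine part is the chain-rule calculation; the conceptual obstacle, such as it is, lies in checking that the implicit function theorem genuinely applies despite $\v g(x)$ being defined only implicitly as a (possibly non-unique) stationary point. The nonsingularity of $f_{YY}$ gives local uniqueness of the branch and hence a well-defined $\mathrm{d}\v g/\mathrm{d}x$; note that convexity or even a minimality condition on $\v g(x)$ is not required, only that $\v g(x)$ be stationary and that the Hessian be invertible there. This matches exactly the setting in which the lemma will be invoked in Section~\ref{sec:bilevel_loss}, where $\v g$ corresponds to $\v \theta^{\star}$ obtained by IRLS on a twice-differentiable objective, and $x$ stands in for the flow matrix $\m V$ (requiring only a straightforward extension to vector $x$, carried out component-wise).
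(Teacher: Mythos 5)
Your proposal is correct and follows essentially the same route as the paper: differentiate the stationarity identity $f_Y(x,\v g(x)) = \v 0$ with respect to $x$ via the chain rule and solve for $\mathrm{d}\v g/\mathrm{d}x$ using the nonsingular Hessian. The only difference is that you explicitly verify the implicit function theorem hypotheses to justify the differentiability of $\v g$, a point the paper leaves implicit (noting only that the result is a special case of Dini's theorem).
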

\begin{proof}
The derivative of $f$ with respect to $\v y$, evaluated at the stationary point $\v g(x)$, is zero by definition. The result follows by differentiating both sides with respect to $x$ using the chain rule, \ie,
\begin{align}
f_{Y}(x, \v g(x)) \doteq \dd{f(x, \v y)}{\v y} \Big|_{\v y = \v g(x)} &= \v 0\\
\ddx{} f_{Y}(x, \v g(x)) &= \v 0\\
\therefore \; f_{XY}(x, \v g(x)) + f_{YY}(x, \v g(x)) \ddx{\v g} &= \v 0
\end{align}
and rearranging the terms.
\end{proof}

For brevity we have shown the derivative with respect to a single parameter. For multiple parameters, the derivative can be computed with respect to each parameter separately.
Here, the matrix $f_{YY}$ need only be inverted or decomposed once for the full set of parameters.
It is also worth noting that the gradient is valid for any stationary point $\v g(x)$ of $f$, including local minima, maxima and saddle points.

Applied to the problem under consideration, we get
\begin{equation}
\label{eqn:implicit_diff_essential}
\dd{\v \theta^{\star}}{\m V} = -\left(\deldel{l(\m V, \v \theta^{\star}(\m V))}{\Theta}\right)^{-1} \frac{\partial^2 l(\m V, \v \theta^{\star}(\m V))}{\partial V \partial \Theta}.
\end{equation}
Automatic differentiation, such as the Autograd package in PyTorch, can be used to compute the necessary Jacobian and Hessian matrices.
Observe from~\eqref{eqn:implicit_diff_essential} that although we require $\v \theta^{\star}$ to be a stationary point of $l$, the computation of the gradient is independent of the algorithmic steps used to determine $\v\theta^{\star}$ and hence $\m E$. Thus to be clear, automatic differentiation, if used, is applied to the objective function itself and not the algorithmic procedure used to find its minimum, different from standard usage in deep learning models.
The total derivative of $L$ in problem \eqn{bilevel_objective} is then
\begin{equation}
\label{eqn:implicit_diff_derivative}
\dd{L(\m V, \v \theta^{\star}(\m V))}{\m V} = \del{L}{\m V} + \del{L}{\v \theta^{\star}} \dd{\v \theta^{\star}}{\m V}
\end{equation}
with all terms evaluated at $(\m V, \v \theta^{\star})$.
This defines the exact gradient of loss function $L$ constrained by \eqn{bilevel_constraint}. 

\section{Unsupervised Optical Flow Estimation}
\label{sec:unsupervised}

Now that we have a means of incorporating a global geometric loss function and a geometric estimation layer into an end-to-end learning framework, we can present our full training pipeline as shown in \fig{flowchart}.
At a high level, our network computes the optical flow from a pair of images and then estimates the camera motion using an embedded robust geometric optimization algorithm.
The estimated camera motion is used to self-supervise the optical flow network, alongside standard photometric, smoothness and consistency losses.
This approach can be used to enhance any state-of-the-art flow estimation network, which is currently the approach of Liu \etal \cite{liu2019selflow}.
As such, we use their unsupervised training strategy, which can be viewed as an intelligent data augmentation approach, in order to improve performance in highly occluded scenes.

\textbf{Loss functions:}
Similar to previous works, we use brightness constancy and local smoothness constraints by proposing photometric and smoothness losses. 
Following Meister \etal \cite{meister2018unflow}, we apply a ternary census transform $C(\cdot)$ on the input images, which is robust to real-world violations of the brightness constancy constraint \cite{zabih1994non, stein2004efficient}. 
Since brightness constancy does not hold for occluded pixels, we estimate an occlusion map based on the forward-backward consistency prior \cite{meister2018unflow} and only apply the photometric loss on non-occluded pixels.
For these pixels, we also apply a forward-backward consistency loss, to encourage consistent optical flow in both directions. 

Let $M_i$ indicate whether the $i$\textsuperscript{th} pixel is non-occluded and let $Z = \sum_{i=1}^N M_i$ be the number of non-occluded pixels.
We define the photometric loss $L_{\text{p}}$ and the forward-backward consistency loss $L_{\text{c}}$ as follows:
\begin{align}
    L_{\text{p}} &= \frac{1}{Z} \sum_{i=1}^{N} M_i \charb \left(C(I, \v p_i) - C(I', \v p_i + \v v_i) \right) \\
    L_{\text{c}} &= \frac{1}{Z} \sum_{i=1}^{N} M_i \charb \left(\m V^{\text{f}}(\v p_i) + \m V^{\text{b}}(\v p_i + \v v_i) \right)
\end{align}
where $\charb(\v z) = \frac{1}{n} \sum_{i=1}^{n} (z_i^2 + \epsilon^2)^{\gamma}$ is the element-wise average of the robust generalized Charbonnier penalty function \cite{sun2014quantitative} with $\epsilon = 10^{-3}$ and $\gamma = 0.45$,
and $\m V^{\text{f}}$ and $\m V^{\text{b}}$ are the predicted forward and backward optical flow, indexed by image space coordinates.

To encourage the predicted optical flow to be locally smooth, we apply an edge-aware smoothness loss \cite{godard2017unsupervised}, based on the assumption that motion boundaries often coincide with image edges. The smoothness loss is defined
\begin{equation}
    L_{\text{s}} = \frac{1}{2N} \sum_{i=1}^N \left( \sum_{a\in\{u, v\}}e^{-\frac{\alpha}{3} \left\|\del{I(\v p_i)}{a} \right\|_1} \left\|\del{\v v_i}{a} \right\|_1 \right).
\end{equation}

\textbf{Training strategy:}
We adopt the training strategy of Liu and co-authors \cite{liu2019selflow, liu2019ddflow} by having a teacher network and a student network in order to artificially generate occlusions and apply supervision on these regions.
The teacher network is first trained until convergence with the proposed loss functions.
The output of this network is then used to supervise the training of a student network, whose weights are initialized from the teacher network. 
Following Liu \etal \cite{liu2019selflow}, we generate occluded regions by computing SLIC superpixels \cite{achanta2012slic} and replacing randomly-selected superpixels with random noise.
Another source of generated occlusions comes from randomly cropping the input images \cite{liu2019ddflow}.
Pixels warped outside of the cropped image frame are considered to be occluded.
These artificial occlusions are only used in the student network, where the output of the teacher network is able to supervise the predicted flow.
This makes it possible for the student network to learn to estimate flow more accurately in occluded regions.

Let $O_i$ indicate the $i$\textsuperscript{th} pixel that is occluded in the synthetically-generated image but non-occluded in the original image, and let $Y = \sum_{i=1}^N O_i$ denote the number of such pixels.
We define the occlusion loss function for training the student network as
\begin{equation}
    L_{\text{o}} = \frac{1}{Y} \sum_{i=1}^{N} O_i \charb \left(\m V(\v p_i) - \tilde{\m V}(\v p_i) \right),
\end{equation}
where $\m V$ denotes the flow predicted by the student network and $\tilde{\m V}$ denotes the flow predicted by the teacher network. 

We also apply multi-scale training at five different resolutions to handle large motions. The epipolar loss is only applied at the highest resolution.
With weights $\lambda_\bullet$ on the loss terms, our total loss for the teacher network is
\begin{equation}
\label{eqn:loss_teacher}
L_\text{t} = \sum_{i=1}^5 \lambda^{i} \left( \lambda_{\text{p}}^{\vphantom{i}} L_{\text{p}}^{i} + \lambda_{\text{c}}^{\vphantom{i}} L_{\text{c}}^{i} +  \lambda_{\text{s}}^{\vphantom{i}} L_{\text{s}}^{i} \right) + \lambda_{\text{e}} L_{\text{e}},
\end{equation}
and the total loss for the student network is
\begin{equation}
\label{eqn:loss_student}
L_\text{s} = \sum_{i=1}^5 \lambda^{i} \left( \lambda_{\text{p}}^{\vphantom{i}} L_{\text{p}}^{i} + \lambda_{\text{c}}^{\vphantom{i}} L_{\text{c}}^{i} +  \lambda_{\text{s}}^{\vphantom{i}} L_{\text{s}}^{i} + \lambda_{\text{o}}^{\vphantom{i}} L_{\text{o}}^{i}\right) + \lambda_{\text{e}} L_{\text{e}}.
\end{equation}
Note that $L_\text{e}$ refers to the epipolar loss defined in \eqn{geometric_loss}. 

\section{Experiments}
\label{sec:experiments}

\begin{table*}[!t]\centering
\caption{
Optical flow performance comparison on the KITTI 2012 and RGBD-SLAM datasets.
We report the mean End Point Error (EPE) of the predicted optical flow.
Parentheses and the suffix -ft indicate that the models were fine-tuned on the data, missing entries (--) indicate that the results were not reported, asterisks ($*$) indicate that the method uses the test set to select the best performing model, and daggers ($\dagger$) indicate that the results are pre-trained with the Ours-Baseline approach and finetuned with the losses proposed in the papers.}

\vspace{3pt}
\label{tab:flow}
\newcolumntype{C}{>{\centering\arraybackslash}X}
\setlength{\tabcolsep}{0pt} 
\begin{tabularx}{0.17\textwidth}{@{}l l}
    \toprule
    \\
    \addlinespace
    & Method\\
    \midrule
    \multirow{5}{10pt}{\rotatebox{90}{\small Supervised}}
    &SpyNet-ft \cite{ranjan2017optical}\\
    &FlowNet2-ft \cite{ilg2017flownet}\\
    &PWC-Net \cite{sun2018pwc}\\
    &PWC-Net-ft \cite{sun2018pwc}\\
    &PWC-Net-ft* \cite{sun2018pwc}\\
    &SelFlow-ft \cite{liu2019selflow}\\
    \midrule
    \multirow{12}{10pt}{\rotatebox{90}{\small Unsupervised}}
    &UnsupFlownet \cite{jason2016back}\\
    &DSTFlow \cite{ren2017unsupervised}\\
    &DF-Net \cite{zou2018df}\\
    &UnFlow \cite{meister2018unflow}\\
    &OAFlow \cite{wang2018occlusion}\\
    &EPIFlow \cite{zhong2019unsupervised}\\
    &DDFlow \cite{liu2019ddflow} \\
    &SelFlow \cite{liu2019selflow} \\
    \cmidrule[\lightrulewidth](r{0.3em}){2-2}
    &Ours-Baseline\\
    &Ours-Epipolar\\
    &Ours-Occlusion\\
    &Ours\\
    \bottomrule
\end{tabularx}%
\pgfplotstabletypeset[
    column type=, 
    trim cells=true,
    begin table={\begin{tabularx}{0.83\textwidth}{@{}C C C C C C@{}}},
    end table={\end{tabularx}},
    col sep=comma,
    row sep=newline,
    header=false,
    every head row/.style={
        output empty row,
        before row={
            \toprule
            \multicolumn{2}{c}{KITTI 2012 (all)} & \multicolumn{2}{c}{KITTI 2012 (noc)} & \multicolumn{2}{c}{RGBD-SLAM}\\
            \cmidrule(lr){1-2}
            \cmidrule(lr){3-4}
            \cmidrule(lr){5-6}
            train & test & train & test & validation & test\\
        },
        after row=\midrule,
    },
    every last row/.style={after row=\bottomrule},
    every row no 6/.style={before row=\midrule},
    every row no 14/.style={before row=\midrule},
    columns/0/.style={column name=,
        string replace={0}{}, 
        fixed zerofill,
        precision=2,
    },
    columns/1/.style={column name=,
        string replace={0}{}, 
        fixed zerofill,
        precision=2,
    },
    columns/2/.style={column name=,
        string replace={0}{}, 
        fixed zerofill,
        precision=2,
    },
    columns/3/.style={column name=,
        string replace={0}{}, 
        fixed zerofill,
        precision=2,
    },
    columns/4/.style={column name=,
        string replace={0}{}, 
        fixed, fixed zerofill,
        precision=2,
    },
    columns/5/.style={column name=,
        string replace={0}{}, 
        fixed, fixed zerofill,
        precision=2,
    },
    empty cells with={--}, 
    %
    every row 0 column 0/.style={postproc cell content/.style={@cell content/.add={(}{)}}},
    every row 1 column 0/.style={postproc cell content/.style={@cell content/.add={(}{)}}},
    every row 3 column 0/.style={postproc cell content/.style={@cell content/.add={(}{)}}},
    every row 3 column 4/.style={postproc cell content/.style={@cell content/.add={\boldmath}{}}},
    every row 4 column 5/.style={postproc cell content/.style={@cell content/.add={\boldmath}{$^*$}}},
    every row 5 column 0/.style={postproc cell content/.style={@cell content/.add={(\boldmath}{)}}},
    every row 11 column 0/.style={postproc cell content/.style={@cell content/.add={(}{)}}},
    every row 11 column 2/.style={postproc cell content/.style={@cell content/.add={(}{)}}},
    every row 11 column 4/.style={postproc cell content/.style={@cell content/.add={}{$^\dagger$}}},
    every row 11 column 5/.style={postproc cell content/.style={@cell content/.add={}{$^\dagger$}}},
    every row 12 column 4/.style={postproc cell content/.style={@cell content/.add={}{$^\dagger$}}},
    every row 12 column 5/.style={postproc cell content/.style={@cell content/.add={}{$^\dagger$}}},
    every row 13 column 4/.style={postproc cell content/.style={@cell content/.add={}{$^\dagger$}}},
    every row 13 column 5/.style={postproc cell content/.style={@cell content/.add={}{$^\dagger$}}},
    every row 3 column 3/.style={postproc cell content/.style={@cell content/.add={\boldmath}{}}},
    every row 5 column 1/.style={postproc cell content/.style={@cell content/.add={\boldmath}{}}},
    every row 17 column 0/.style={postproc cell content/.style={@cell content/.add={\boldmath}{}}},
    every row 17 column 1/.style={postproc cell content/.style={@cell content/.add={\boldmath}{}}},
    every row 13 column 2/.style={postproc cell content/.style={@cell content/.add={\boldmath}{}}},
    every row 13 column 3/.style={postproc cell content/.style={@cell content/.add={\boldmath}{}}},
    every row 17 column 3/.style={postproc cell content/.style={@cell content/.add={\boldmath}{}}},
    every row 17 column 4/.style={postproc cell content/.style={@cell content/.add={\boldmath}{}}},
    every row 17 column 5/.style={postproc cell content/.style={@cell content/.add={\boldmath}{}}},
]{flow_w_test.csv} 
\end{table*}

{\bf Datasets:} We evaluate our method for unsupervised optical flow estimation on two datasets: the standard KITTI 2012 Flow dataset~\cite{geiger2012we} and the more challenging RGB-D SLAM dataset~\cite{sturm2012benchmark}. The KITTI dataset~\cite{geiger2013vision} contains outdoor road scenes captured by a car-mounted stereo camera rig. Since we are only estimating rigid flow, we evaluate on the KITTI 2012 Flow subset~\cite{geiger2012we}, which has 194 training images with sparse ground-truth optical flow and 195 test images. We do not train on this data, instead we use the KITTI Visual Odometry (VO) dataset which has similar characteristics. 
This has 22 sequences with 87\,060 consecutive image pairs.
We leave out sequences 9 and 10 for motion evaluation and use the remainder for training. 
The RGB-D SLAM dataset~\cite{sturm2012benchmark} is an indoor SLAM dataset with ground-truth pose and depth, from which optical flow can be calculated. The dataset contains varied camera motions, many featureless regions, repetitive patterns, and motion blur, which are well-known to be challenging for optical flow estimation. We select all sequences from the Hand-held SLAM, Robot SLAM, and 3D Object Reconstruction categories. We set aside ``fr1/360'', ``fr2/360\_hemisphere'', ``fr2/pioneer\_360'', and ``fr3/teddy'' for testing. While most of these sequences are static, we remove those few frames that contain dynamic objects.
For the training data, we select image pairs with diverse flow ranges, randomly sampling equally from buckets with a maximum flow of 5--40 pixels, 40--80 pixels and 80--120 pixels. For the test data, we sub-sample the video frames such that the baseline between each image pair is at least 3cm. 
We obtain training and test sets of 29\,106 and 1\,667 image pairs.

{\bf Metrics:} For optical flow, we provide a comparison with a range of recent supervised and unsupervised methods using the standard Average End Point Error (AEPE) metric, given by $\text{AEPE} = \frac{1}{N} \sum_{i=1}^N \|\v v^{i} - \v v^{i}_{\text{gt}}\|$, where $\v v^{i}$ is the predicted flow at the $i$\textsuperscript{th} pixel, $\v v^{i}_{\text{gt}}$ is the ground-truth flow, and $N$ is the number of ground-truth pixels. 
For camera motion evaluation, we use the standard KITTI VO dataset evaluation criterion \cite{geiger2012we}, which evaluates on sub-sequences of length (100, 200, $\dots$, 800) meters, and report the average relative rotational and translational errors for the test sequences 9 and 10 in \tab{motion}.
Let $\m \Delta \m T_{ij} \in \mathrm{SE}(3)$ denote the delta pose difference between the estimated pose and the ground-truth pose given a pair of adjacent frames $i$ and $j$.
The delta pose difference is given by $\m \Delta \m T_{ij} = (\m T_{\text{gt},i}^{-1} \m T_{\text{gt},j}^{\vphantom{-1}} )^{-1} (\m T_i^{-1} \m T_j^{\vphantom{-1}})$, where $\m T_i$ and $\m T_j$ denote the poses at frames $(i, j)$.
The relative translation error is given by $t_{\text{err}}^{i} = \frac{1}{N} \sum_{ij} \|\mathrm{trans} (\m \Delta \m T_{ij})\|$ and relative rotation error is given by $r_{\text{err}} = \frac{1}{N} \sum_{ij} \arccos(0.5(\mathrm{trace}(\mathrm{rot}(\m \Delta \m T_{ij})) - 1)),$ where $\mathrm{trans(\cdot)}$ and $\mathrm{rot(\cdot)}$ extract the translation and the rotation parts of $\m \Delta \m T_{ij}$.

\subsection{Implementation Details} 
\label{sec:implementation}

We use PWC-Net~\cite{sun2018pwc} as our backbone network. Note however that our approach is network-agnostic so other optical flow networks can also be used. Multi-scale supervision is applied to capture large optical flows, especially on the RGB-D SLAM dataset. We generate a five-scale image pyramid starting at the original resolution then halving and warping at each successive level. In the original implementation of PWC-Net \cite{sun2018pwc}, a five-level pyramid of optical flow maps is predicted with the highest resolution being a quarter of the original image resolution. Therefore we scale the predicted optical flow by four using bilinear interpolation to match the corresponding image pyramid. For all networks, the weights for the multi-scale losses $(\lambda^{1},\lambda^{2},\lambda^{3},\lambda^{4},\lambda^{5})$ were set to $(1, 0.34, 0.31, 0.27, 0.08)$ as per Meister \etal \cite{meister2018unflow}, modulo a constant factor.

We train directly on the KITTI VO dataset and the RGBD SLAM dataset to obtain our baseline model. When training the baseline model on KITTI VO, we empirically set
$(\lambda_\text{p}, \lambda_\text{c}, \lambda_\text{s}, \lambda_\text{e})$ to $(1, 0.1, 0.1, 0)$.
We then add the epipolar loss with $\lambda_\text{e}$ set to $1000$ to fine-tune the teacher model.
When training the student model, we set $(\lambda_\text{p}, \lambda_\text{c}, \lambda_\text{s}, \lambda_\text{e}, \lambda_\text{o})$ to $(1, 0, 0, 1000, 1)$, adding the occlusion loss.
We used the same training strategy for the RGBD SLAM dataset, with
$(\lambda_\text{p}, \lambda_\text{c}, \lambda_\text{s}, \lambda_\text{e}, \lambda_\text{o})$ set to $(1, 0.1, 1, 100, 1)$.
All experiments were run on a PC with a single 11GB RTX 2080 Ti GPU.

\begin{figure*}[!t]\centering
	\begin{subfigure}[]{0.195\textwidth}\centering
		\includegraphics[width=\textwidth]{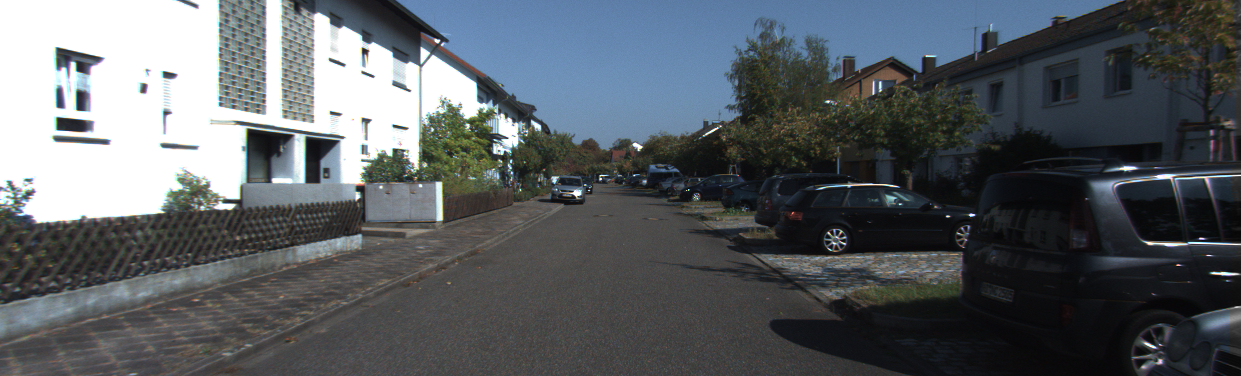}
	\end{subfigure}\hfill%
	\begin{subfigure}[]{0.195\textwidth}\centering
		\includegraphics[width=\textwidth]{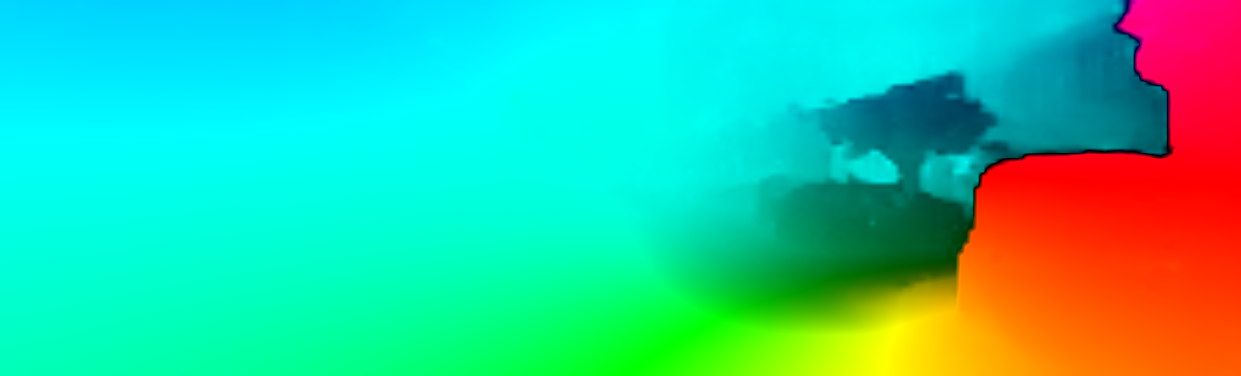}
	\end{subfigure}\hfill%
	\begin{subfigure}[]{0.195\textwidth}\centering
		\includegraphics[width=\textwidth]{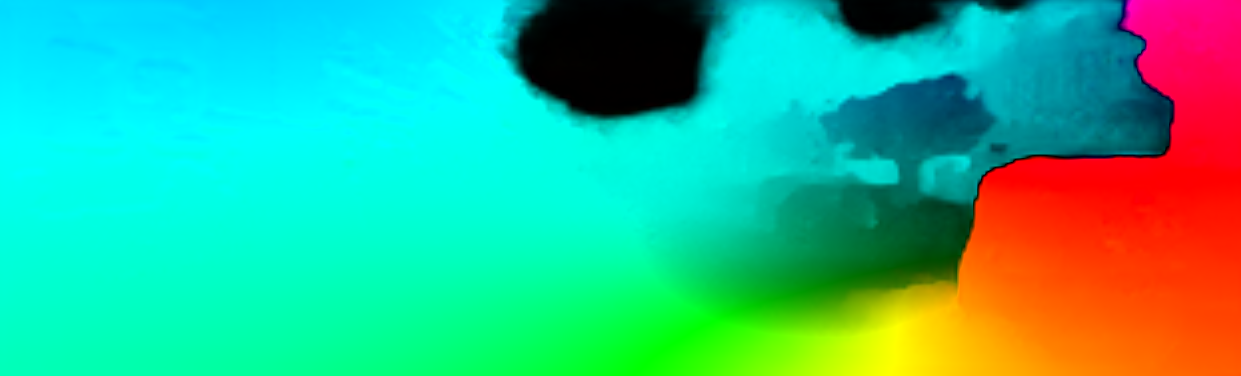}
	\end{subfigure}\hfill%
	\begin{subfigure}[]{0.195\textwidth}\centering
		\includegraphics[width=\textwidth]{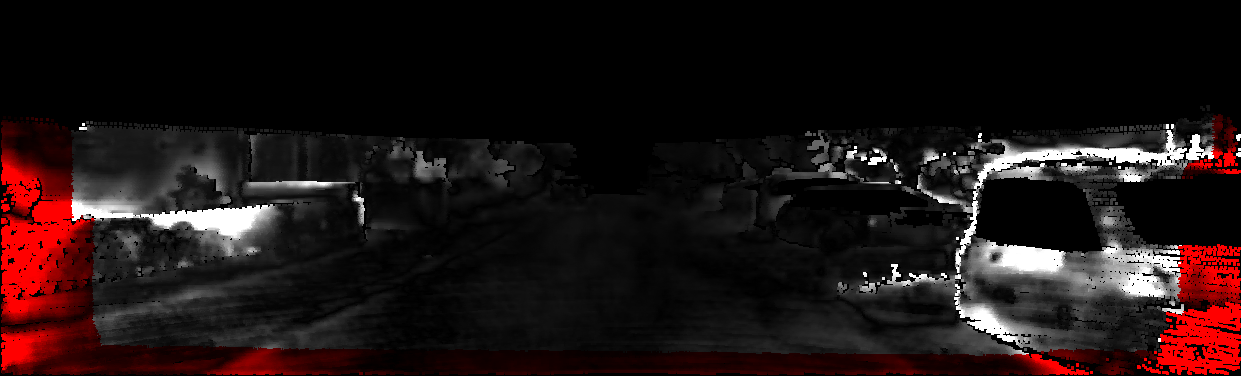}
	\end{subfigure}\hfill%
	\begin{subfigure}[]{0.195\textwidth}\centering
		\includegraphics[width=\textwidth]{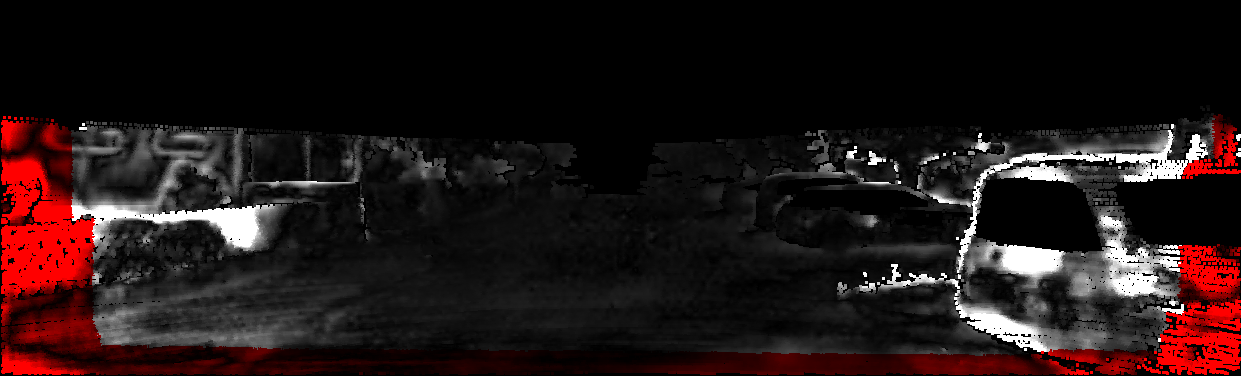}
	\end{subfigure}%
	\vspace{1pt}
	\begin{subfigure}[]{0.195\textwidth}\centering
		\includegraphics[width=\textwidth]{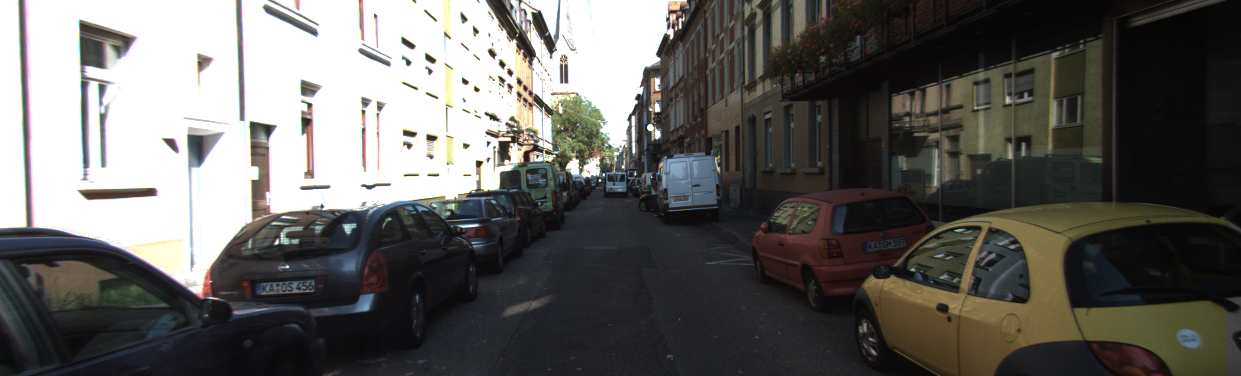}
		\subcaption{Input image}
		\label{fig:kitti_a}%
	\end{subfigure}\hfill%
	\begin{subfigure}[]{0.195\textwidth}\centering
		\includegraphics[width=\textwidth]{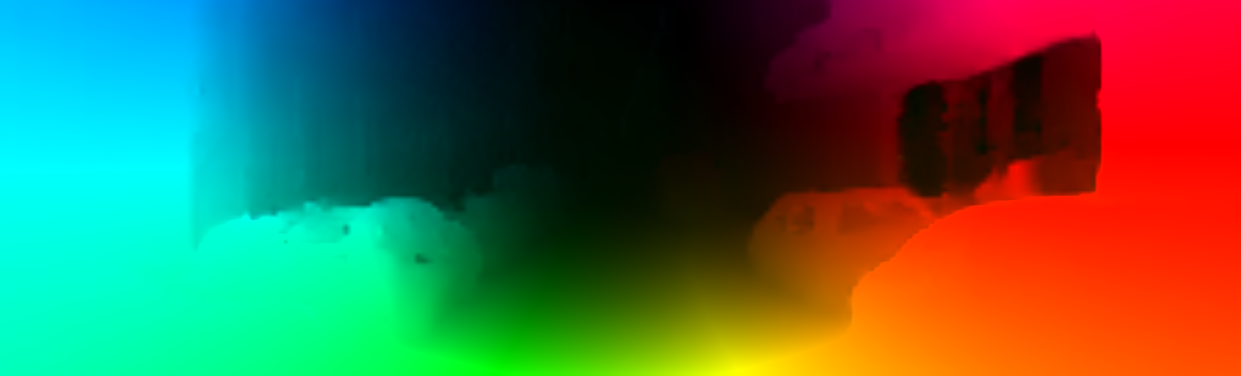}
		\subcaption{Our flow}
		\label{fig:kitti_b}%
	\end{subfigure}\hfill%
	\begin{subfigure}[]{0.195\textwidth}\centering
		\includegraphics[width=\textwidth]{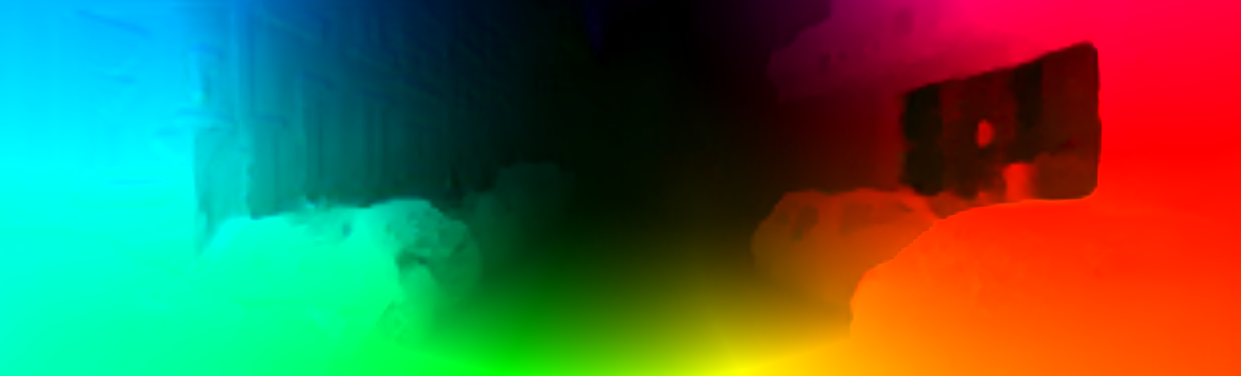}
		\subcaption{SelFlow \cite{liu2019selflow} flow}
		\label{fig:kitti_c}%
	\end{subfigure}\hfill%
	\begin{subfigure}[]{0.195\textwidth}\centering
		\includegraphics[width=\textwidth]{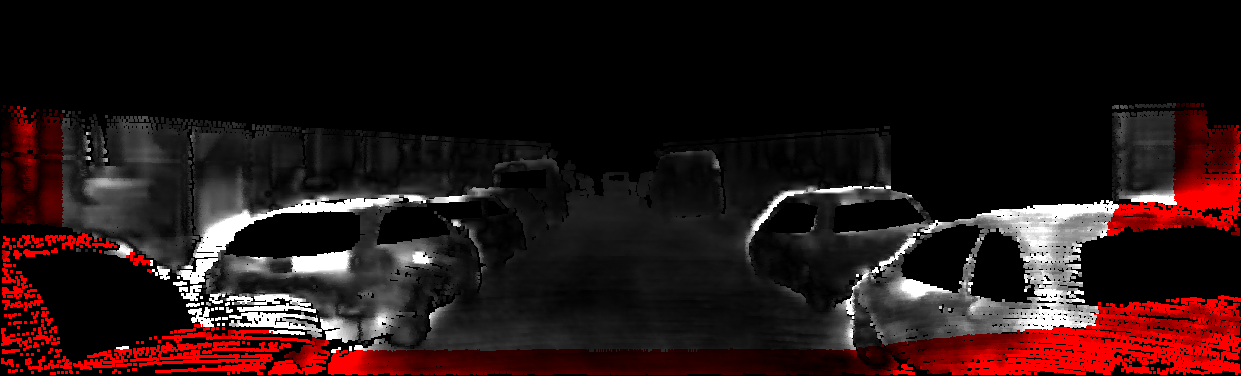}
		\subcaption{Our error}
		\label{fig:kitti_d}%
	\end{subfigure}\hfill%
	\begin{subfigure}[]{0.195\textwidth}\centering
		\includegraphics[width=\textwidth]{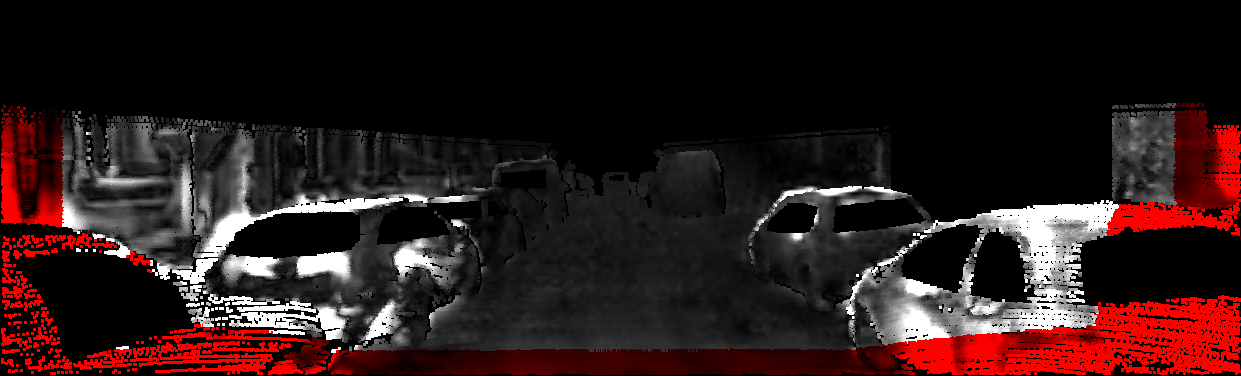}
		\subcaption{SelFlow \cite{liu2019selflow} error}
		\label{fig:kitti_e}%
	\end{subfigure}%
	\vspace{2pt}
	\caption{Qualitative results on the KITTI 2012 test set. We compare our method with SelFlow \cite{liu2019selflow}. \subref{fig:kitti_a} The first image of the input image pair. \subref{fig:kitti_b} Optical flow predicted by our model. \subref{fig:kitti_c} Optical flow predicted by SelFlow. \subref{fig:kitti_d} Optical flow prediction error with respect to the ground-truth using our method. \subref{fig:kitti_e} Optical flow prediction error using SelFlow.
	}
	\label{fig:kitti_qualitative}
\end{figure*}

\subsection{Essential Matrix Estimation}
\label{sec:essential_estimation}

For estimating the essential matrix, we first obtain a robust initial estimate using RANSAC \cite{fischler1981random} and the five-point algorithm \cite{li2006five, nister2004efficient}.
We randomly sample $10\,000$ correspondences from the predicted optical flow and each GPU thread randomly selects five points on which to run the five-point algorithm \cite{nister2004efficient}.
Hence, each thread provides an essential matrix hypothesis for RANSAC.
We select the essential matrix that has the most inliers, with respect to a set of $2\,000$ test correspondences, to initialize the iteratively re-weighted least squares (IRLS) algorithm.
IRLS iteratively minimizes our robust lower-level objective function $l$.
The stopping criteria is when the objective function is below $10^{-20}$ or the number of iterations exceeds $200$.
The inlier threshold $\delta$ was empirically set to $0.001$. 

\begin{figure*}[!t]\centering
    \begin{subfigure}[]{0.195\textwidth}\centering
		\includegraphics[width=\textwidth]{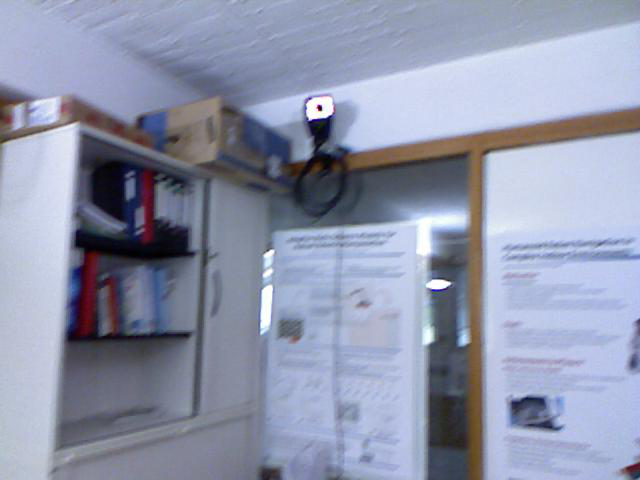}
	\end{subfigure}\vspace{1pt}\hfill%
	\begin{subfigure}[]{0.195\textwidth}\centering
		\includegraphics[width=\textwidth]{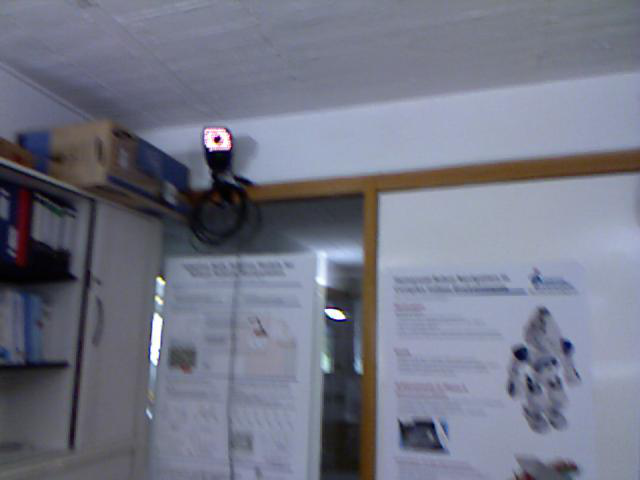}
	\end{subfigure}\hfill%
	\begin{subfigure}[]{0.195\textwidth}\centering
		\includegraphics[width=\textwidth]{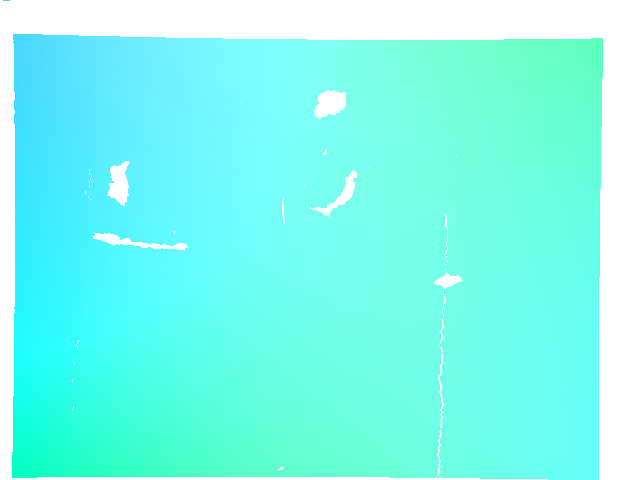}
	\end{subfigure}\hfill%
	\begin{subfigure}[]{0.195\textwidth}\centering
		\includegraphics[width=\textwidth]{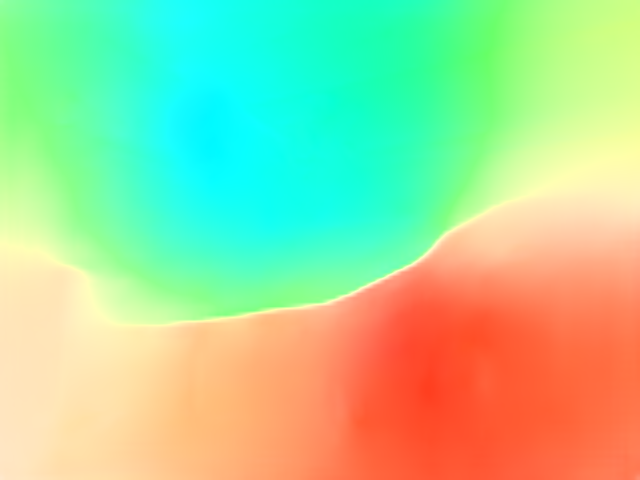}
	\end{subfigure}\hfill%
	\begin{subfigure}[]{0.195\textwidth}\centering
		\includegraphics[width=\textwidth]{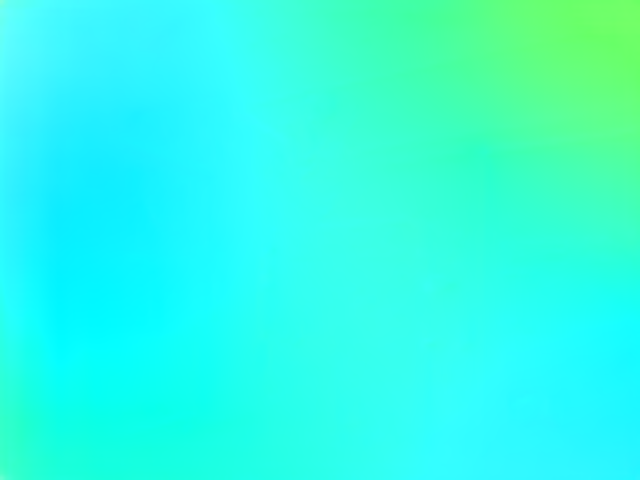}
	\end{subfigure}\hfill%
	
	\begin{subfigure}[]{0.195\textwidth}\centering
		\includegraphics[width=\textwidth]{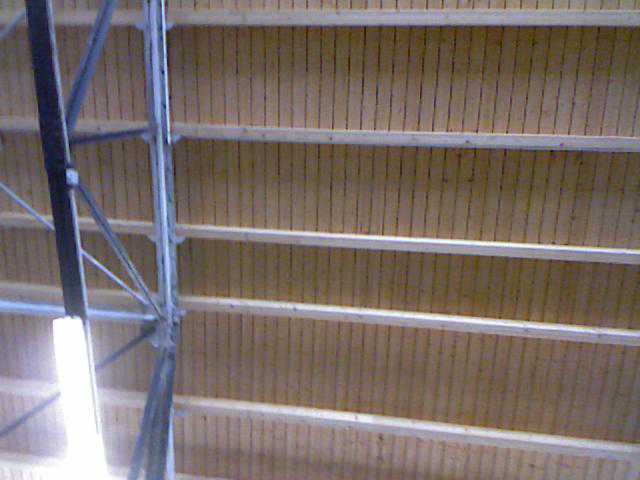}
		\subcaption{Input image 1}
		\label{fig:rgbd_a}%
	\end{subfigure}\hfill%
	\begin{subfigure}[]{0.195\textwidth}\centering
		\includegraphics[width=\textwidth]{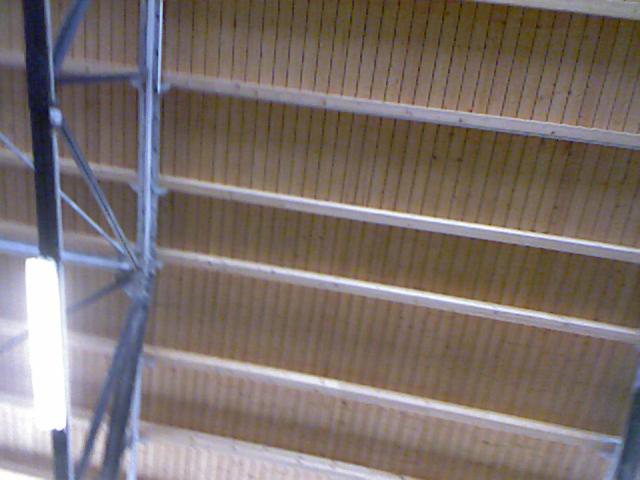}
		\subcaption{Input image 2}
		\label{fig:rgbd_b}%
	\end{subfigure}\hfill%
	\begin{subfigure}[]{0.195\textwidth}\centering
		\includegraphics[width=\textwidth]{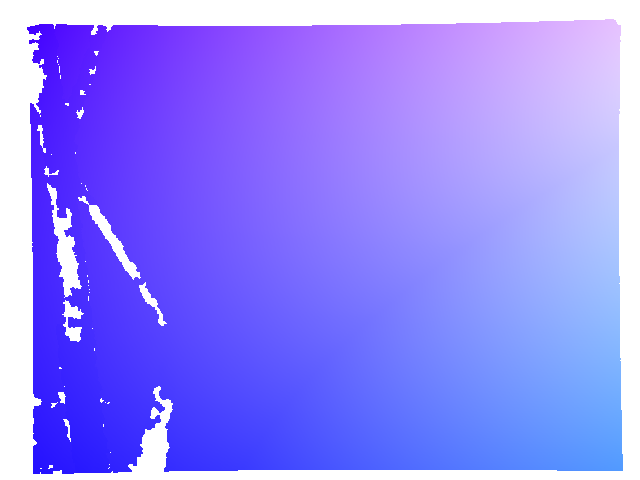}
		\subcaption{GT flow}
		\label{fig:rgbd_c}%
	\end{subfigure}\hfill%
	\begin{subfigure}[]{0.195\textwidth}\centering
		\includegraphics[width=\textwidth]{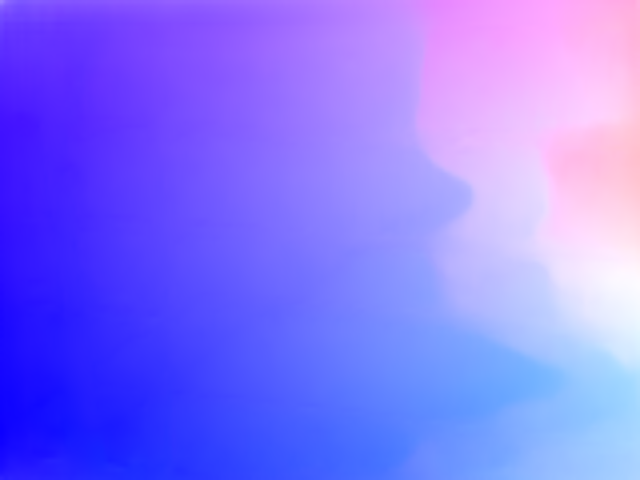}
		\subcaption{Our flow without $L_\text{e}$}
		\label{fig:rgbd_d}%
	\end{subfigure}\hfill%
	\begin{subfigure}[]{0.195\textwidth}\centering
		\includegraphics[width=\textwidth]{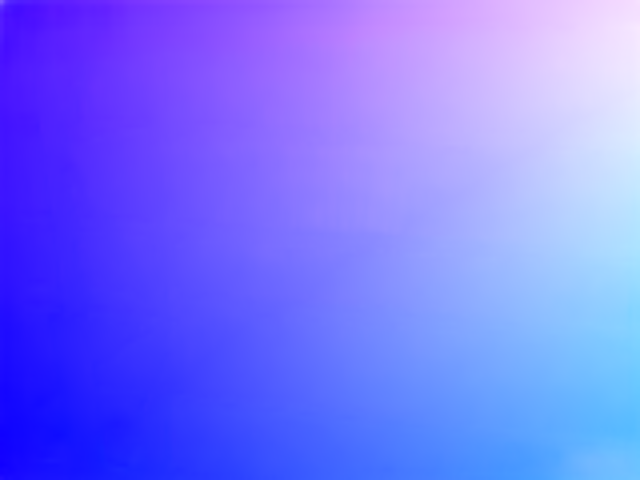}
		\subcaption{Our flow}
		\label{fig:rgbd_e}%
	\end{subfigure}%
	\vspace{2pt}
	\caption{Qualitative results on the RGBD-SLAM test set. The ground-truth flow is generated from the sparse ground-truth depth and the ground-truth pose provided by the dataset. The examples show that in cases of large motions, featureless regions and repetitive textures, the global geometric loss helps the network to learn to predict correct optical flow.}
	\label{fig:rgbd_qualitative}
\end{figure*}

\subsection{Optical Flow Results}
\label{sec:flow_results}

Quantitative results for optical flow on the KITTI and RGB-D SLAM datasets are reported in \tab{flow}. For ablation purposes, we also provide results for Ours-baseline, Ours-Epipolar, and Ours-Occlusion. Ours-baseline refers to the model that is only trained with $L_\text{p}$, $L_\text{c}$ and $L_\text{s}$. Ours-Epipolar refers to the model that is trained with the same losses but also the epipolar loss $L_\text{e}$. This is the result for our teacher network. Ours-Occlusion refers to training the baseline network with the teacher-student strategy described in \Sect{unsupervised}, but without our proposed epipolar loss.  

For KITTI, our model outperforms previous unsupervised optical flow methods and achieves results very close to supervised methods.
The ablation results indicate that the global geometric losses have a significant positive impact on the optical flow quality, decreasing error by approximately $20\%$ on average compared to the baseline. 
We can also see that the data augmentation technique proposed by Liu \etal \cite{liu2019selflow} only improves on the occluded pixels while our method improves on both the occluded and non-occluded pixels.
Note that compared with SelFlow~\cite{liu2019selflow}, our method only uses two frames as input for training and testing while SelFlow uses five frames for training and three frames for testing. We show improved performance despite using fewer frames as input. 

For the RGB-D SLAM dataset, our model outperforms previous state-of-the-art unsupervised optical flow methods and has slightly lower performance compared to a supervised method. 
Note that the results of other unsupervised methods reported in the table share the same backbone network as ours and are pre-trained using our baseline approach. They are then finetuned with the proposed losses in the respective papers.
The supervised method overfits on the training and validation data and therefore uses the test set to select the best performing model.
We show that we also make significant improvements over the baseline method on this challenging dataset, indicating the usefulness of the global geometric constraint. 
This dataset has a much wider variety of camera motions than the KITTI dataset, making optical flow estimation more difficult and camera motion estimation more helpful. 
Qualitative results for the RGB-D SLAM dataset are shown in \fig{rgbd_qualitative}. They demonstrate that the brightness constancy and smoothness assumptions are insufficient to correctly resolve the flow in challenging scenarios. 

\begin{table}[!t]
    \centering
    \caption{Odometry comparison for the KITTI VO dataset. We compare with an existing SLAM system and state-of-the-art unsupervised depth and motion learning algorithms. We report the translation error (\%) and the rotation error (degrees per 100m). }
    \vspace{2pt}
    \label{tab:motion}
    \resizebox{8.25cm}{!}{
    \begin{tabular}{lcccc}
    \toprule
    & \multicolumn{2}{c}{Seq. 9}  & \multicolumn{2}{c}{Seq. 10} \\
    \cmidrule(lr){2-3}
    \cmidrule(lr){4-5}
    Method & $t_{\text{err}}(\%) $ & $r_{\text{err}}(\degree/100\text{m}) $ & $t_{\text{err}}(\%) $ & $r_{\text{err}}(\degree/100\text{m}) $ \\
    \midrule
    ORB-SLAM \cite{mur2015orb} & $2.51$ & $0.26$ & $2.10$ & $0.48$ \\
    \midrule
    Zhou \etal \cite{zhou2017unsupervised} & $17.72$ & $6.82$ & $36.57$ & $17.69$ \\
    Zhan \etal \cite{zhan2018unsupervised} & $6.87$ & $3.60$ & $7.87$ & $3.41$ \\
    Gordon \etal \cite{gordon2019depth} & $\mathbf{3.10}$ & -- & $5.40$ & -- \\
    Bian \etal \cite{bian2019unsupervised} & $6.07$ & $2.19$ & $7.56$ & $4.63$ \\
    Ours & $4.36$ & $\mathbf{0.69}$ & $\mathbf{4.04}$ & $\mathbf{1.37}$ \\
    \bottomrule
    \end{tabular}}
\end{table}

\begin{figure*}[!t]\centering
    \begin{subfigure}[]{0.49\textwidth}\centering
        \includegraphics[width=\textwidth, trim=0 8pt 0 0, clip]{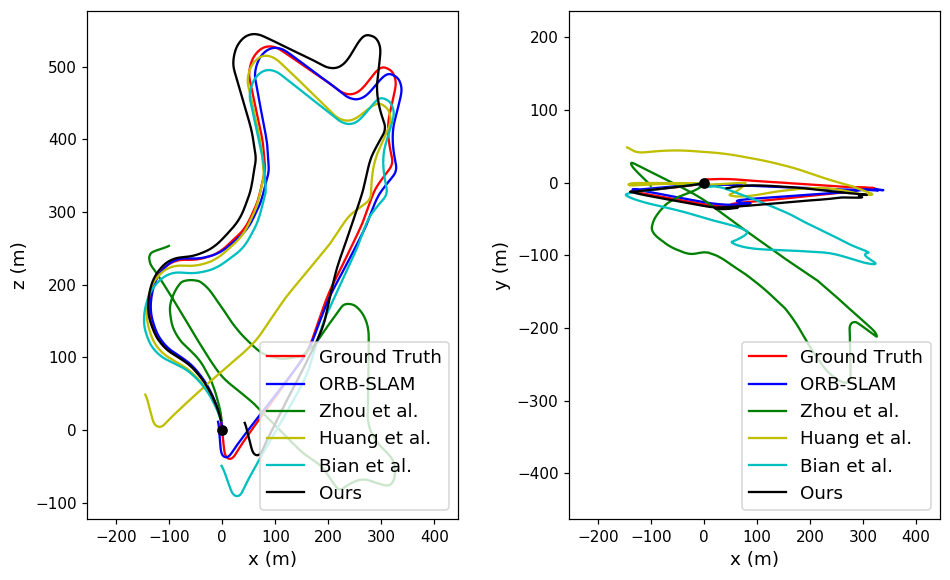}
        \subcaption{Seq. 9}
        \label{fig:seq09}
    \end{subfigure}
    \begin{subfigure}[]{0.50\textwidth}\centering
        \includegraphics[width=\textwidth, trim=0 8pt 0 0, clip]{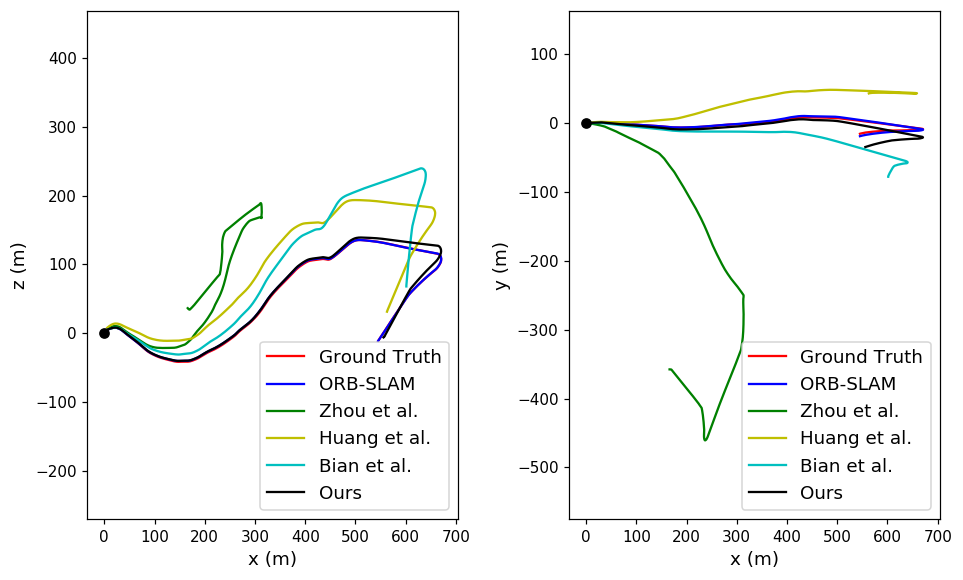}
        \subcaption{Seq. 10}
        \label{fig:seq09}
    \end{subfigure}
    \caption{Qualitative results on KITTI VO sequences 9 and 10. For each sequence we provide the xz and xy odometry map. Second to ORB-SLAM, our method is closest to the ground-truth odometry. When visualizing the xy odometry map, it can be seen that unsupervised learning methods \cite{zhou2017unsupervised, zhan2018unsupervised, bian2019unsupervised} have significant error in the y-axis direction, while our method has minimal error.}
    \label{fig:pose}
\end{figure*}

\subsection{Ego-motion Results}
\label{sec:motion_results}

We estimate the camera pose by decomposing our estimated essential matrix frame-by-frame, without any bundle adjustment, as opposed to ORB-SLAM \cite{mur2015orb}.
Since our estimated essential matrix does not contain any scale information, we have to align our scale with the ground-truth frame-by-frame.
For fair comparison, we also align the scales of the compared methods \cite{zhou2017unsupervised, zhan2018unsupervised, bian2019unsupervised}.

The quantitative results for ego-motion estimation on KITTI VO are shown in \tab{motion}.
Our method of estimating the camera pose significantly outperforms methods that directly regress the pose using a network.
Qualitative results for the odometry comparison are shown in \fig{pose}, from which we can see our algorithm achieves performance close to ORB-SLAM, up to scale.

\section{Discussion and Conclusion}
\label{sec:conclusion}

In this paper, we have proposed a pipeline that is able to learn optical flow and egomotion simultaneously in an unsupervised manner by incorporating global geometric constraints into an optical flow estimation network. 
In particular, our method uses the implicit differentiation technique to allow back-propagating the gradients through a complicated geometric estimation algorithm without needing to compute the gradient for each algorithmic step.
Given that the algorithm is complex, iterative, and involves a non-differentiable RANSAC procedure, it would otherwise be impossible to train end-to-end.
Unlike approaches such as differentiable RANSAC \cite{brachmann2017dsac}, we do not need to weaken any sub-components of the algorithm to make them easier for a network to compute, by for example, replacing the non-differentiable argmax hypothesis selction with probabilistic selection. 
Moreover, our formulation allows us to estimate the \emph{essential matrix} and back-propagate through this estimation layer. 
This gives a much tighter constraint than the fundamental matrix, having fewer degrees of freedom, and admits the use of state-of-the-art geometric algorithms.
In contrast, the 8-point algorithm for estimating the fundamental matrix linearizes the problem and does not exploit the given knowledge of the intrinsic camera parameters, which can result in "quite inaccurate" estimation of focal lengths and thus epipolar constraints \cite{hartley2002sensitivity}.
It has only been the algorithm of choice for network architectures because it has heretofore not been known how to back-propagate through an essential matrix estimation layer.

Our model produces state-of-the-art results for unsupervised learning of optical flow, including for challenging data on which existing algorithms are known to perform poorly. We have also demonstrated superior camera motion estimation by optimizing an essential matrix from the predicted optical flow, compared with unsupervised methods that directly regress camera pose.
Our approach to including a geometric estimation layer in a deep learning framework can be adapted to many other problems.
This work provides a case study that demonstrates the usefulness of implicit differentiation as a tool for improving computer vision models.

\paragraph{Acknowledgements} Shihao Jiang would like to thank Suryansh Kumar, Yiran Zhong, Yao Lu and Kartik Gupta for helpful discussions. This research is supported in part by the Australian Government through the Australian Research Council and Data61 CSIRO. 

{\small
\bibliographystyle{ieee_fullname}
\bibliography{egbib}
}

\end{document}